\documentclass{article}

\usepackage{microtype}
\usepackage{subcaption}
\usepackage{booktabs} 
\usepackage{multirow}
\usepackage[table,xcdraw]{xcolor}
\usepackage{graphicx}
\definecolor{lightgray}{gray}{0.9}
\usepackage{fontawesome5} 
\newcommand{\CLEAR}{\textbf{\texttt{\textcolor{blue!75!black}{CLEAR}}}}
\newcommand{\Strict}{\textbf{\texttt{\textcolor{gray!70!black}{Strict}}}}
\newcommand{\Surge}{\textbf{\texttt{\textcolor{red!75!black}{Surge}}}}
\newcommand{\Ample}{\textbf{\texttt{\textcolor{orange!90!black}{Ample}}}}

\DeclareRobustCommand{\Uniform}{\textbf{\texttt{\textcolor{gray!65!black}{Uniform}}}}
\DeclareRobustCommand{\Predictor}{\textbf{\texttt{\textcolor{gray!40!black}{Predictor}}}}
\DeclareRobustCommand{\Oracle}{\textbf{\texttt{\textcolor{red!70!black}{Oracle}}}}
\DeclareRobustCommand{\TALEEP}{\textbf{\texttt{\textcolor{orange!85!black}{TALE-EP}}}}

\usepackage{hyperref}

\usepackage[accepted]{icml2026}

\usepackage{amsmath}
\usepackage{amssymb}
\usepackage{mathtools}
\usepackage{amsthm}

\usepackage[capitalize,noabbrev]{cleveref}

\theoremstyle{plain}
\newtheorem{theorem}{Theorem}[section]

\theoremstyle{definition}
\newtheorem{definition}[theorem]{Definition}

\theoremstyle{remark}

\usepackage[textsize=tiny]{todonotes}

\icmltitlerunning{Economic Perspective on Optimal Budget Allocation for LLMs}

\begin{document}

\twocolumn[
  \icmltitle{The Shadow Price of Reasoning:\\
  Economic Perspective on Optimal Budget Allocation for LLMs
}


  \icmlsetsymbol{equal}{*}

  \begin{icmlauthorlist}
    \icmlauthor{Xu Wan}{equal,1,2,3}
    \icmlauthor{Speed Zhu}{equal,2}
    \icmlauthor{Jianwei Cai}{2}
    \icmlauthor{Guang Chen}{2}
    \icmlauthor{Ximing Huang}{3}
    \icmlauthor{Wiggin Zhou}{2}
    \icmlauthor{Mingyang Sun}{3}
  \end{icmlauthorlist}
  \icmlaffiliation{1}{Zhejiang University}
  \icmlaffiliation{2}{Tecent HY Team}
  \icmlaffiliation{3}{Peking University}

  \icmlcorrespondingauthor{Mingyang Sun}{smy@pku.edu.cn}

  \icmlkeywords{Machine Learning, ICML}

  \vskip 0.3in
]



\printAffiliationsAndNotice{}  

\begin{abstract}
        Inference-time scaling has emerged as a critical avenue for enhancing Large Language Models' performance, yet real-world deployment is constrained by strict computational budgets. In this work, we formulate inference budget allocation as a global constrained optimization problem governed by economic principles. By modeling per-query reasoning utility with a shifted-surge function, we derive an optimal allocation policy based on a global shadow price that equilibrates marginal utility under resource scarcity.
        Based on this theory, we propose \textbf{\texttt{C}}onstrained \textbf{\texttt{L}}atent-utility \textbf{\texttt{E}}quilibrium \textbf{\texttt{A}}llocation for \textbf{\texttt{R}}easoning (\CLEAR{}). It performs rational abandonment and reallocates resources from insolvent queries to solvable queries near their emergence thresholds.
        Extensive experiments on several reasoning tasks with different traffic streams demonstrate that \CLEAR{} significantly improves the Pareto frontier of total token cost versus mean accuracy. In resource-scarce regimes, \CLEAR{} achieves up to a \textbf{3$\times$} improvement in global accuracy compared to uniform allocation. 

    \faGithub \ The code is available in \href{https://github.com/waunx/CLEAR}{\textbf{\texttt{Here}}}.
\end{abstract}

\section{Introduction}
\label{sec:intro}

\begin{figure}[t]
    \centering
    \includegraphics[width=0.95\linewidth]{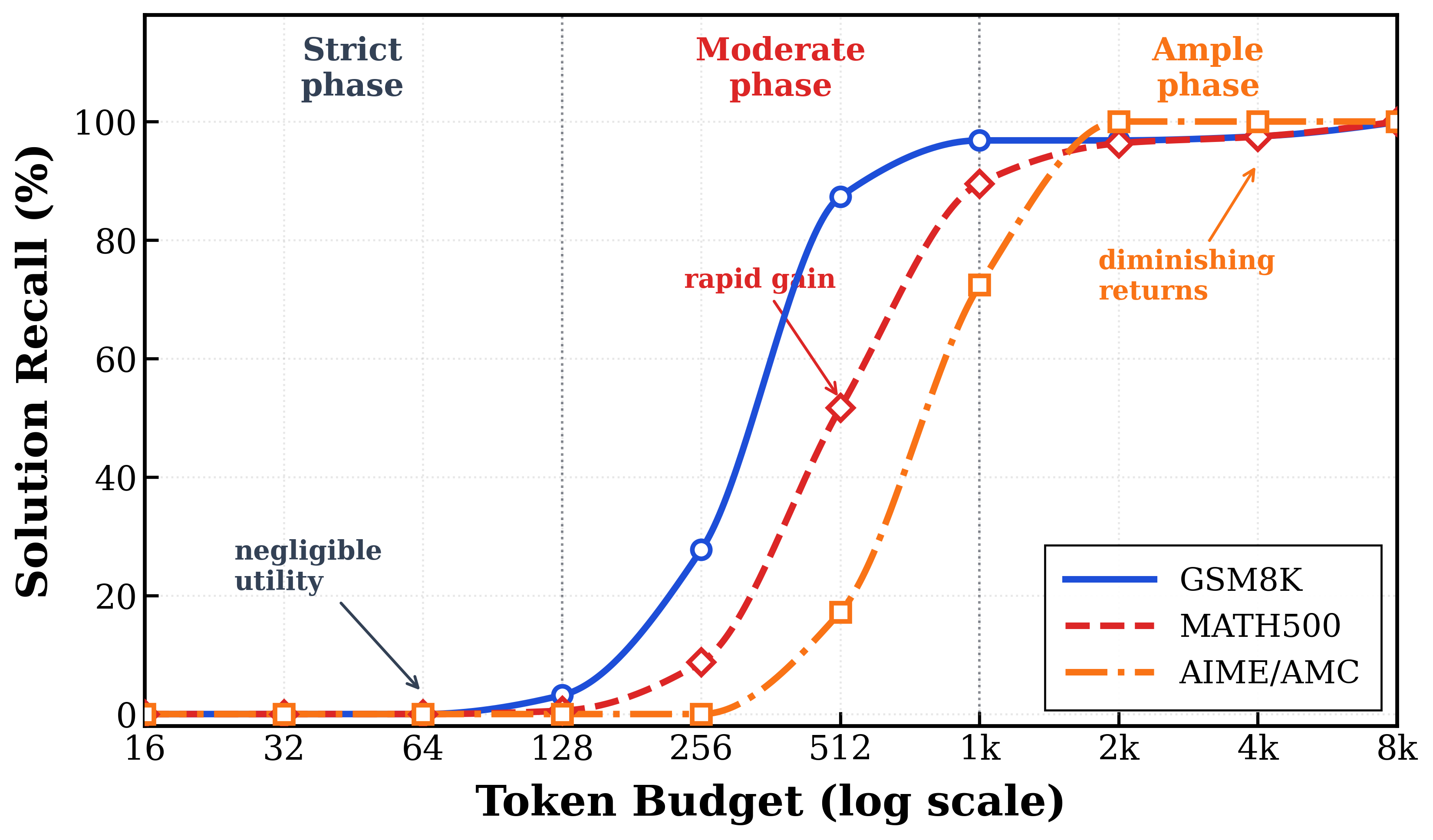}
    \caption{\textbf{The S-Shaped Compute-Utility Curve.} We evaluate \texttt{Qwen2.5-Math-7B}~\citep{yang2024qwen2} on three benchmarks. The budget-performance relationship exhibits three distinct regions: (1) a pre-threshold \Strict{} phase with negligible utility; (2) a rapid \Surge{} phase offering high leverage; and (3) an \Ample{} phase characterized by diminishing returns.}
    \label{fig:s_curve_regions}
\end{figure}

\begin{figure*}[htbp]
    \centering
    \includegraphics[width=\linewidth]{Figures/modelling.png} 
    \caption{\textbf{Empirical Rollouts and Latent Utility.} For selected \texttt{AIME-24} problems, blue bars show the number of correct rollouts in each length bin, while red curves depict the fitted latent utility mapping induced by our shifted-surge model.}
    \label{fig:motivation_scaling}
\end{figure*}

The development of Large Language Models (LLMs) is undergoing a paradigm shift from training-time scaling to inference-time scaling~\citep{wei2022chain, snell2024scaling, wu2024inference}. Recent breakthroughs demonstrate that increasing test-time compute can yield performance gains comparable to scaling model parameters by invoking ``System 2'' reasoning capabilities~\citep{weston2023system, brown2024large}. A series of length-scaling experiments has furthermore validated that by allowing models to think longer, it leads to substantial improvements in complex reasoning tasks~\citep{muennighoff2025s1, aggarwal2025l1}. However, in real-world deployment scenarios, test-time compute is a finite and expensive commodity. Whether serving millions of concurrent users via cloud APIs or running models on resource-constrained edge devices, practitioners operate under strict global budget constraints~\citep{chen2023frugalgpt}. The central challenge thus shifts from merely increasing the theoretical ceiling of model intelligence to maximizing global utility under a fixed computational cap~\citep{raposo2024mixture}.

Standard deployment practices usually impose a {uniform policy}, assigning the same generation limit (e.g., a fixed \texttt{max\_new\_tokens}) to every query. This implicitly assumes that all queries share a comparable compute-utility profile, which conflicts with the large heterogeneity of reasoning tasks. As illustrated in Figure~\ref{fig:s_curve_regions}, reasoning utility follows an S-shaped curve when we fix the decoding strategy and vary the token limit. For difficult queries like \texttt{AIME}, a uniform limit may leave the trajectory in the \Strict{} phase, where compute is spent but utility remains near zero. For easy queries like \texttt{GSM8K}, the same limit may push generation into the \Ample{} phase, where additional tokens bring only diminishing returns. Of course, in an unconstrained-resource setting, the inefficiency caused by pushing easy queries into the \Ample{} phase may be less consequential. However, this work focuses on the resource-constrained setting, where a fixed total token budget is imposed over the entire evaluation set. The central question is therefore how to allocate tokens across individual queries so as to maximize overall token utility under this global budget.

We therefore treat inference budgeting as a batch-level allocation problem: given a fixed total token supply, decide how many tokens each query should receive to maximize aggregate expected utility. This gives rise to a global constrained optimization problem over instance-specific utility curves.
Although the resulting objective is non-convex, its Lagrangian form reveals a simple economic principle: at optimum, every active query should spend tokens until its marginal utility matches a common \textit{global shadow price}~\citep{boyd2004convex, devanur2019near}. Queries whose attainable marginal gain never clears this price should receive no budget, while the remaining queries share the budget according to their utility slopes.

Building on these findings, we propose \textbf{\texttt{C}}onstrained \textbf{\texttt{L}}atent-utility \textbf{\texttt{E}}quilibrium \textbf{\texttt{A}}llocation for \textbf{\texttt{R}}easoning (\CLEAR{}). Instead of a uniform utility for each query, \CLEAR{} maps it into a surge-shaped utility curve. The system then operates as a computational market:
(1) \textbf{Threshold Modeling:} We estimate the emergence threshold of each query and instantiate its latent utility curve;
(2) \textbf{Price Discovery:} We employ a fast bisection search to find the unique global shadow price that clears the market, ensuring the total demand matches the available budget;
(3) \textbf{Optimal Allocation:} Based on the discovered price, we apply a closed-form policy derived from the Lambert W function to strictly determine the token limit for each query, automatically handling truncation or rational abandonment.
Crucially, \CLEAR{} requires no retraining of the backbone LLM and operates as a plug-and-play inference wrapper.

Our contributions are summarized as follows:

(1) We identify a three-phase compute-utility pattern in LLM reasoning and formalize inference-time token allocation over non-concave latent utility curves, where the optimal policy is governed by a global shadow price.
  
(2) We derive a closed-form Lambert W allocation policy and instantiate it in \CLEAR{}, a plug-and-play framework that combines latent-threshold prediction with market-clearing price discovery.
    
(3) We validate \CLEAR{} on mixed-complexity mathematical reasoning benchmarks, demonstrating improved cost-accuracy Pareto efficiency and robustness to hyperparameter choices and predictor noise.

\section{Empirical Motivation}
\label{sec:motivation}
To better model the per-query utility of reasoning tokens, we investigate whether longer reasoning is uniformly beneficial for each query. If every problem has an instance-specific favorable reasoning length, then inference utility should naturally rise after a minimum threshold but eventually saturate or decline when generation becomes excessive.

Following the thoughtology analysis~\citep{marjanovic2025deepseek} in \texttt{DeepSeek-R1}, we conduct a controlled experiment using \texttt{Qwen2.5-Math-7B}. We perform sampling with a high temperature $T=1.0$ to induce diverse reasoning paths of varying lengths. We generate $N=50$ responses for \texttt{AIME-24} and $N=4$ for the \texttt{GSM8K}, \texttt{MATH-500} benchmarks, and group the resulting trajectories into length bins to compute the conditional \texttt{Pass@1} accuracy.

As visualized in Figure~\ref{fig:motivation_scaling}, reasoning utility exhibits non-linear dynamics rather than simple scaling. We identify three recurring regimes: a \Strict{} phase where short trajectories fail consistently, implying a minimum solvable threshold; a \Surge{} phase where performance rises sharply past this threshold; and an \Ample{} phase where additional generation yields diminishing returns and may eventually degrade solution quality. 

\section{Problem Formulation}
\label{sec:formulation}
Based on the empirical observations above, we first model the per-query reasoning utility with a function that captures the \Strict{}--\Surge{}--\Ample{} structure. Under this utility model, the problem of assigning tokens subject to a fixed total budget naturally becomes a global constrained optimization problem with an instance-dependent, non-concave utility landscape.

\subsection{Modeling the Physics of Reasoning}
 While the observed outcome of a reasoning task is binary (correct or incorrect), we posit that this outcome is governed by a continuous, unobservable variable: the {reasoning utility} $\phi(t)$. For a query $s_i$, we model the accumulated potential after generating $t$ tokens follows a Shifted Surge Function:

\begin{equation}
    \phi_i(t) = \begin{cases} 
    0 & 0 \le t < \tau_i \\
    \alpha_i (t - \tau_i) \cdot e^{-\beta_i (t - \tau_i)} & t \ge \tau_i 
    \end{cases}
    \label{eq:surge_potential}
\end{equation}
This parameterization aligns the latent utility model with the three empirical regimes observed above:

\textbf{\Strict{}:} Utility remains zero until the generation length crosses the latent emergence threshold $\tau_i$.

\textbf{\Surge{}:} Once the threshold is crossed, reasoning potential rises with initial velocity $\alpha_i$, capturing the rapid accumulation of valid reasoning.

\textbf{\Ample{}:} When generation extends excessively, the exponential decay term $e^{-\beta_i \Delta t}$ dominates, reflecting diminishing returns.

In Figure~\ref{fig:motivation_scaling}, the red curve visualizes the fitted latent reasoning potential, which provides a continuous utility mapping from the empirical rollout outcomes observed at different generation lengths.

\subsection{The Global Optimization Objective}
We adopt a pure token-based cost model where the cost $C_i(t) = t$. The system's goal is to allocate a vector of tokens $\mathbf{t} = [t_1, \dots, t_N]$ for $N$ queries to maximize aggregate reasoning potential under a global token budget limit $B_{\text{total}}$:

\begin{equation}
\begin{aligned}
    \max_{\mathbf{t} \in \mathbb{R}_{\ge 0}^N} \quad & \sum_{i=1}^N \phi_i(t_i) \\
    \text{s.t.} \quad & \sum_{i=1}^N t_i \le B_{\text{total}}.
\end{aligned}
\label{eq:global_opt}
\end{equation}

\section{Theoretical Analysis}
\label{sec:theory}

To solve the non-convex optimization problem in Eq.~\eqref{eq:global_opt}, we apply the method of Lagrange multipliers. By relaxing the global budget constraint, we establish an economic principle governing optimal inference: the equalization of marginal reasoning potential.

\subsection{Shadow Price Parity}
We construct the Lagrangian $\mathcal{L}(\mathbf{t}, \lambda) = \sum_{i} \phi_i(t_i) - \lambda (\sum_{i} t_i - B_{\text{total}})$, where $\lambda \ge 0$ is the Lagrange multiplier. In economic terms, $\lambda$ represents the \textit{global shadow price} of computation—the marginal gain in total potential obtained by relaxing the budget by one token.

The Karush-Kuhn-Tucker (KKT) conditions imply that for any globally optimal allocation $\mathbf{t}^*$, the marginal gain of each active task must align with this global price. This parity determines whether a query should be lifted out of \Strict{}, placed near its high-leverage \Surge{}, or capped before wasting budget in \Ample{}. We formalize this as:

\begin{definition}[Shadow Price Parity]
\label{def:shadow_price_parity}
A strictly positive budget allocation $t_i^* > 0$ is a local maximum if it satisfies the first-order stationarity condition, equalizing the marginal reasoning potential to the global shadow price:
\begin{equation}
    \frac{\partial \phi_i(t_i^*)}{\partial t_i} = \lambda.
\end{equation}
The {global optimal policy} is then determined by comparing the utility surplus at this stationary point against the abandonment option ($t_i=0$). We term this the \textit{Rational Abandonment Condition}, where tasks whose maximum net surplus falls below zero are deemed economically insolvent and assigned zero budget.
\end{definition}

\subsection{Individual Optimal Allocation Policy}
For the surge potential $\phi_i(t) = \alpha_i \Delta t e^{-\beta_i \Delta t}$, where $\Delta t = t - \tau_i$, the shadow price parity condition yields the differential equation:
\begin{equation}
    \frac{d \phi_i}{d t} = \alpha_i e^{-\beta_i \Delta t} (1 - \beta_i \Delta t) = \lambda.
\end{equation}
While standard algebraic methods cannot solve for $t$ in this transcendental equation, we prove that the exact solution is given by the {Lambert W function} $W(z)$, defined as the inverse of $f(w) = we^w$.

\begin{theorem}[Individual Optimal Allocation Policy]
\label{thm:lambert}
Under the shadow price parity condition, the optimal token allocation $t_i^*$ for a given $\lambda$ follows a closed-form solution:
\begin{equation}
    t_i^*(\lambda) = \tau_i + \frac{1}{\beta_i} \left[ 1 - W_0\left( \frac{\lambda e}{\alpha_i} \right) \right],
    \label{eq:lambert_solution}
\end{equation}
subject to the solvency constraint $\phi_i(t_i^*) > \lambda t_i^*$. Here, $W_0(\cdot)$ denotes the principal branch of the Lambert W function, and $e$ is Euler's number.
\end{theorem}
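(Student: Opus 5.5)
The plan is to solve the stationarity equation in Definition~\ref{def:shadow_price_parity} explicitly for $\Delta t = t - \tau_i \ge 0$, and then to settle which branch of the solution and which candidate point (interior versus abandonment) is the global maximizer. Write $u = 1 - \beta_i \Delta t$, so that $\beta_i \Delta t = 1-u$. The condition
\[
\alpha_i e^{-\beta_i \Delta t}(1-\beta_i \Delta t) = \lambda
\]
becomes $\alpha_i e^{u-1} u = \lambda$, that is, $u e^{u} = \lambda e/\alpha_i$. By the definition of the Lambert W function as the inverse of $w e^{w}$, this gives $u = W(\lambda e/\alpha_i)$. Substituting back yields $\Delta t = \frac{1}{\beta_i}\bigl[1 - W(\lambda e/\alpha_i)\bigr]$, which is Eq.~\eqref{eq:lambert_solution} once the branch is fixed.

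The branch is fixed from the shape of $\phi_i$. For $\lambda > 0$ the argument $z = \lambda e/\alpha_i$ is positive, and on $(0,\infty)$ only the principal branch $W_0$ is real, with $W_0(z) > 0$. Hence $u \in (0,1)$ and $\Delta t \in (0, 1/\beta_i)$. This is exactly the interval on which $\phi_i$ is increasing. It is also consistent with second-order conditions. We have $\phi_i''(t) = \alpha_i\beta_i e^{-\beta_i\Delta t}(\beta_i\Delta t - 2)$, which is negative for $\Delta t < 2/\beta_i$, so the stationary point is a strict local maximum. Moreover, the marginal utility $\alpha_i e^{u-1}u$ is strictly increasing in $u$ on $(0,1)$, so the solution is unique on the increasing part of the curve.

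The other root, with $\Delta t > 1/\beta_i$, would need $u<0$. That corresponds to $\lambda<0$ (branch $W_{-1}$, or $W_0$ on $(-1/e,0)$), and it lies where $\phi_i' < 0$, so it is excluded. The remaining candidates are $t_i = 0$ and points in the Strict region $[0,\tau_i)$. There $\phi_i = 0$, so the Lagrangian contribution $\phi_i(t)-\lambda t$ is maximized at $t=0$ with value $0$. The boundary $\Delta t \to \infty$ gives $-\infty$. The case $\lambda = 0$ is handled as a limit: $W_0(0)=0$ gives $\Delta t = 1/\beta_i$, the unconstrained peak.

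The main obstacle is the global comparison. Because $\phi_i$ is non-concave, with a jump in slope at $\tau_i$ from $0$ to $\alpha_i$, stationarity alone does not guarantee optimality. I would finish by maximizing the per-query Lagrangian term $\phi_i(t)-\lambda t$ over $t\ge0$. The only candidates are $t=0$ and the interior $W_0$ point, so the interior point is optimal iff its surplus $\phi_i(t_i^*) - \lambda t_i^*$ is positive. This is precisely the solvency constraint in the statement; when it fails, rational abandonment gives $t_i = 0$.

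I would note that this establishes optimality for the Lagrangian relaxation at a given $\lambda$. Recovering optimality for the original problem in Eq.~\eqref{eq:global_opt} would additionally require a market-clearing $\lambda$, and that step I would not claim here.
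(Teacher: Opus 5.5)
Your proposal is correct and follows the paper's proof: the same substitution $u = 1-\beta_i\Delta t$, the same reduction to $ue^{u} = \lambda e/\alpha_i$, the principal branch $W_0$, and the same comparison of the interior surplus against abandonment at $t=0$. Your real-branch argument, second-order check and explicit candidate enumeration add rigor the paper only asserts, but one correction is needed: $u\in(0,1)$ requires $\lambda<\alpha_i$. For $\lambda\ge\alpha_i$ you get $W_0(\lambda e/\alpha_i)\ge 1$, so there is no stationary point with $\Delta t>0$; indeed $\phi_i'<\alpha_i\le\lambda$ on $t>\tau_i$, the surplus $\phi_i(t)-\lambda t$ is decreasing on $[\tau_i,\infty)$, and $t=0$ is optimal. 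This is the abandonment case the paper handles separately in its phase-transition discussion and in Algorithm~\ref{alg:clear_solver}.
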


\begin{proof}
The objective for each task $i$ is to maximize the net economic surplus $J_i(t)$, defined as the difference between the latent reasoning potential and the opportunity cost of tokens:
\begin{equation}
    J_i(t) = \phi_i(t) - \lambda t = \alpha_i (t - \tau_i) e^{-\beta_i (t - \tau_i)} - \lambda t.
\end{equation}
We assume the task is in the active reasoning phase ($t \ge \tau_i$). Let $\Delta t = t - \tau_i$ denote the effective generation length. The first-order necessary condition for optimality requires the marginal potential to equal the shadow price:
\begin{equation}
    \frac{d \phi_i}{dt} = \lambda.
\end{equation}
Applying the product rule to differentiate the surge function with respect to $t$:
\begin{equation}
    \begin{aligned}
        \frac{d}{dt} \left[ \alpha_i \Delta t e^{-\beta_i \Delta t} \right] &= \alpha_i \left( 1 \cdot e^{-\beta_i \Delta t} + \Delta t \cdot (-\beta_i) e^{-\beta_i \Delta t} \right) \\
        &= \alpha_i e^{-\beta_i \Delta t} (1 - \beta_i \Delta t).
    \end{aligned}
\end{equation}
Equating this to $\lambda$, we obtain the transcendental equation:
\begin{equation}
    \alpha_i e^{-\beta_i \Delta t} (1 - \beta_i \Delta t) = \lambda.
\end{equation}
To solve for $\Delta t$, we transform this equation into the canonical form of the Lambert W function, $w e^w = z$. We introduce a change of variable:
\begin{equation}
    u = 1 - \beta_i \Delta t.
\end{equation}
This implies $\beta_i \Delta t = 1 - u$, and consequently, the exponential term becomes $e^{-\beta_i \Delta t} = e^{-(1-u)} = e^{u-1}$. Substituting these into the optimality condition:
\begin{equation}
    \begin{aligned}
        \alpha_i \cdot e^{u-1} \cdot u &= \lambda \\
        u e^u e^{-1} &= \frac{\lambda}{\alpha_i} \\
        u e^u &= \frac{\lambda e}{\alpha_i}.
    \end{aligned}
\end{equation}
By the definition of the Lambert W function, the solution for $u$ is given by the principal branch $W_0$, as we require the solution corresponding to the physical expansion phase:
\begin{equation}
    u = W_0\left( \frac{\lambda e}{\alpha_i} \right).
\end{equation}
Finally, we substitute $u$ back to solve for the optimal allocation $t_i^*$:
\begin{equation}
    \begin{aligned}
        1 - \beta_i (t_i^* - \tau_i) &= W_0\left( \frac{\lambda e}{\alpha_i} \right) \\
        \beta_i (t_i^* - \tau_i) &= 1 - W_0\left( \frac{\lambda e}{\alpha_i} \right) \\
        t_i^* &= \tau_i + \frac{1}{\beta_i} \left[ 1 - W_0\left( \frac{\lambda e}{\alpha_i} \right) \right].
    \end{aligned}
\end{equation}
This interior solution is valid if and only if the resulting net surplus is positive ($J_i(t_i^*) > 0$). If the surplus is non-positive, the global optimum defaults to the boundary solution $t_i^* = 0$.
\end{proof}

\subsection{Global Equilibrium and Phase Transitions}
\label{sec:phase_transition}

The Lambert W policy derived in Eq.~\eqref{eq:lambert_solution} describes the local optimum for a fixed price $\lambda$. However, the inference system operates under a hard global constraint. The \textit{global optimal policy} is realized by finding the unique market-clearing price $\lambda^*$ that satisfies the resource budget:
\begin{equation}
    \sum_{i=1}^N t_i^*(\lambda^*) = B_{\text{total}}.
    \label{eq:market_clearing}
\end{equation}
Since $t_i^*(\lambda)$ is strictly monotonically decreasing with respect to $\lambda$, a unique $\lambda^*$ is guaranteed to exist. The magnitude of the global budget $B_{\text{total}}$ inversely determines $\lambda^*$. This induces three system-level allocation regimes, which correspond to different operating points on the per-query \Strict{}--\Surge{}--\Ample{} utility curves:

\textbf{1. Abundant-Budget Regime.}
When $B_{\text{total}}$ is large, the shadow price $\lambda^*$ approaches zero. In this limit, the Lambert W term vanishes: $W_0(0)=0$, and the allocation converges to $t_i \to \tau_i + 1/\beta_i$. Active tasks are therefore allowed to approach the peak of their utility curves, near the transition from \Surge{} to \Ample{}.

\textbf{2. Scarce-Budget Regime.}
As $B_{\text{total}}$ tightens, $\lambda^*$ rises. The term $W_0(\frac{\lambda e}{\alpha_i})$ increases, compressing the allocation $t_i^*$ toward the threshold $\tau_i$. The system thus prioritizes queries whose allocated tokens can still land in the high-marginal-return \Surge{} segment.

\textbf{3. Abandonment Regime.}
When $B_{\text{total}}$ is critically low, $\lambda^*$ exceeds the initial velocity $\alpha_i$ for difficult tasks ($\lambda^* > \alpha_i$). No positive allocation can generate enough marginal surplus to justify crossing the threshold, so such tasks are abandoned and remain before their \Strict{} threshold with indicator function $\mathbb{I}(\cdot) = 0$.

\section{Methodology}
\label{sec:methods}

In this section, we translate the theoretical optimal policy derived in Theorem~\ref{thm:lambert} into \CLEAR{}, a practical inference-time control algorithm. Unlike heuristic methods that rely on manual rules for truncation or abandonment, \CLEAR{} numerically solves for the shadow price $\lambda^*$ that equilibrates the aggregate demand of the batch with the global supply of tokens.

\subsection{Threshold Prediction and Parametric Scaling}
The core input to our Lambert W policy is the emergence threshold $\tau_i$, which captures the minimum computation required for query $s_i$ to enter the productive reasoning regime. We employ a lightweight predictor $f_\theta: \mathcal{S} \to \mathbb{R}_+$ based on \texttt{DeBERTa-v3-base} to estimate this value:
\begin{equation}
    \hat{\tau}_i = \exp(f_\theta(s_i)).
\end{equation}
To map these predictions into actionable utility curves, \CLEAR{} relies on two critical parameters that govern the system's behavior:

\textbf{Initial Velocity $\alpha$:}
This parameter defines the initial magnitude of the utility function's derivative at the predicted threshold $\hat{\tau}$. 
It functions as the system's \textit{reservation price}: any query where the global market-clearing price $\lambda^*$ exceeds $\alpha$ yields a negative net surplus and is assigned zero tokens. 

\textbf{Decay Rate $\beta$:} 
This parameter governs the exponential rate at which marginal utility diminishes beyond the predicted length $\hat{\tau}$. It is inversely proportional to the characteristic length of the token allocation window. To accommodate varying traffic conditions, we implement an adaptive mechanism where $\beta$ is derived dynamically from the aggregate budget surplus:
\begin{equation}
    \beta = \frac{1}{\max(\epsilon, \bar{B} - \bar{\tau})}.
\end{equation}
Here, $\bar{B}$ is the designated budget constraint and $\bar{\tau}$ is the mean predicted budget. 

\textit{Remarks.} 
While Theorem~\ref{thm:lambert} provides a general solution for task-specific shaping parameters $\{\alpha_i, \beta_i\}$, predicting $\beta_i$ of an unseen query is computationally intractable and prone to high variance. 
Therefore, we adopt a model-intrinsic assumption: we treat $\alpha$ and $\beta$ as global hyperparameters that characterize the average reasoning dynamics of the LLM backbone itself, rather than the specific query. We assume task heterogeneity is primarily captured by the threshold parameter $\tau_i$, which we predict dynamically. 

\subsection{Global Shadow Price Optimization}

The core of \CLEAR{} lies in identifying the precise $\lambda^*$ that saturates the global budget constraint. Leveraging the monotonicity established in Section~\ref{sec:phase_transition}, where the individual optimal allocation $t_i^*(\lambda)$ strictly decreases with respect to the shadow price, we guarantee that the aggregate consumption function $C(\lambda) = \sum t_i^*(\lambda)$ is invertible. Therefore, we solve $\lambda^*$ using the bisection method, proceeding in three distinct steps:
\begin{itemize}
    \item \textbf{Search Space Initialization:} 
    We define the search interval $[\lambda_{\min}, \lambda_{\max}]$. The lower bound $\lambda_{\min} = 0$ represents the saturation regime (infinite budget). The upper bound $\lambda_{\max} = \alpha$ corresponds to the total abandonment regime. Crucially, $\alpha$ serves as the theoretical ceiling; any price $\lambda > \alpha$ yields negative surplus for all tasks, resulting in zero allocation.

    \item \textbf{Candidate Evaluation:} 
    In each iteration, we propose a candidate price $\lambda_{\text{mid}}$ and compute the optimal allocation for every query using the Lambert W Policy (Eq.~\ref{eq:lambert_solution}). This involves three substeps: first, computing the unconstrained surge length via the principal branch $W_0$ using the adaptive $\beta$; second, enforcing the solvency condition by zeroing out allocations where the cost $\lambda \cdot t$ exceeds the utility (Rational Abandonment); and third, applying the physical context limit $T_{\text{max}}$.

    \item \textbf{Market Clearing Update:} 
    We aggregate the individual allocations to determine the total consumption $C_{\text{total}}$. If $C_{\text{total}}$ exceeds the global budget $B_{\text{total}}$, indicating excess demand, we raise the price by setting $\lambda_{\min} \leftarrow \lambda_{\text{mid}}$. Conversely, if there is a supply surplus, we lower the price by setting $\lambda_{\max} \leftarrow \lambda_{\text{mid}}$.
\end{itemize}

The full procedure is summarized in Algorithm~\ref{alg:clear_solver}. Note that the Lambert W function $W_0(z)$ is efficiently computed via standard numerical libraries, thus introducing negligible latency compared to the autoregressive generation process.

\section{Experimental Settings}
\label{sec:experiments}

\paragraph{Models and Training Data}
We employ \texttt{Qwen2.5-Math-7B-Instruct} and \texttt{Qwen3-30B-A3B-Instruct}~\citep{yang2025qwen3} as the frozen backbones for all reasoning tasks. To estimate the latent emergence threshold $\tau(s)$, we fine-tune a \texttt{DeBERTa-v3-base} encoder as the threshold predictor. The predictor is trained to regress the logarithmic token length of the model's generated solutions, which serves as a proxy for the emergence threshold. The training data is drawn exclusively from splits of \texttt{GSM8K} and \texttt{MATH}.

\paragraph{Evaluation Datasets and Streams}
As illustrated in Figure~\ref{fig:traffic_stream_length}, we evaluate on a mixed oracle pool spanning \texttt{MATH-500}, \texttt{AMC-23}, \texttt{AIME-24}, \texttt{AIME-25}, \texttt{Minerva}, and \texttt{OlympiadBench}, and construct four synthetic inference streams: \texttt{Balanced}, \texttt{Mostly-Easy}, \texttt{Mostly-Hard}, and \texttt{U-Shaped}. The detailed training and evaluation setting is shown in Table~\ref{tab:dataset_stats} and \ref{tab:dataset_stats_30b}.

\begin{figure*}[htbp]
    \centering
    \includegraphics[width=1.0\linewidth]{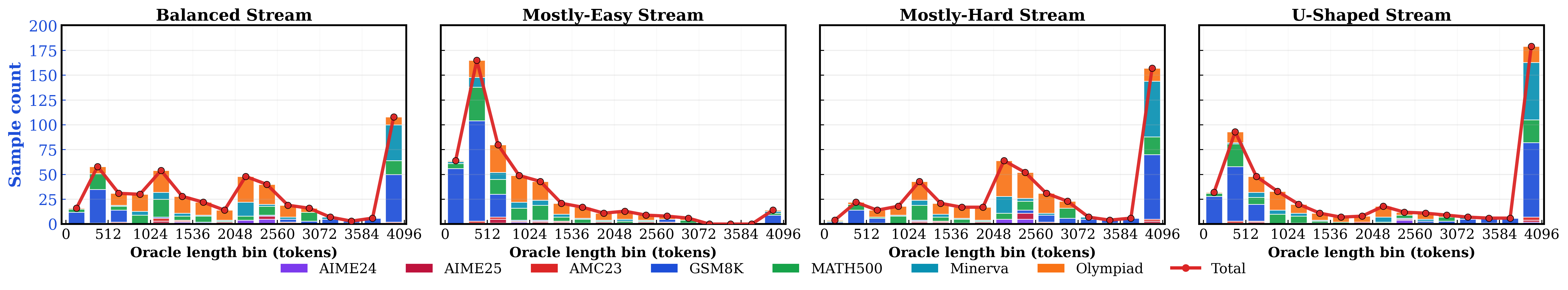}
    \caption{\textbf{Oracle-Length Distribution across Evaluation Traffic Streams.}
    Each panel shows one synthetic traffic stream (\texttt{Balanced}, \texttt{Mostly-Easy}, \texttt{Mostly-Hard}, and \texttt{U-Shaped}), with $n{=}500$ queries sampled from the 7B oracle pool.}
    \label{fig:traffic_stream_length}
\end{figure*}

\paragraph{Baselines and Allocation Policies}
We benchmark \CLEAR{} (\texttt{Lambert}) \footnote{We use \CLEAR{} (\texttt{Lambert}) to denote the full \CLEAR{} pipeline proposed above, distinguishing it from the internal \CLEAR{} variants used for ablation.} against several allocation strategies. \Uniform{} evenly distributes the global budget across all queries, while \Predictor{} assigns tokens proportional to the predicted threshold $\hat{\tau}_i$ but does not support abandonment. \TALEEP{} \citep{han2025token} serves as an external length-prediction baseline that estimates per-query token demand and renormalizes allocations to the same global budget. We also compare against two internal ablations: \CLEAR{} (\texttt{Heuristic}), which applies an affine allocation with a median-based rejection cutoff, and \CLEAR{} (\texttt{Auction}), which greedily admits queries by predicted return on investment under the budget constraint. Finally, \Oracle{} provides an upper bound using ground-truth solution lengths. Detailed formulations are provided in Appendix~\ref{app:allocation_details}.

\section{Results and Analysis}
\label{sec:results}

\subsection{Latent Threshold Modeling}
Accurately predicting the exact token consumption for reasoning tasks is inherently challenging. However, for our resource allocation purpose, the global shadow price $\lambda$ naturally acts as a normalizer, absorbing absolute prediction errors. Thus, the allocation primarily relies on the predictor's ability to correctly rank tasks by latent threshold. Figure~\ref{fig:prediction_analysis} demonstrates the log-scale regression performance of our predictor. 
\begin{figure}[htbp]
    \centering
    \includegraphics[width=1.0\linewidth]{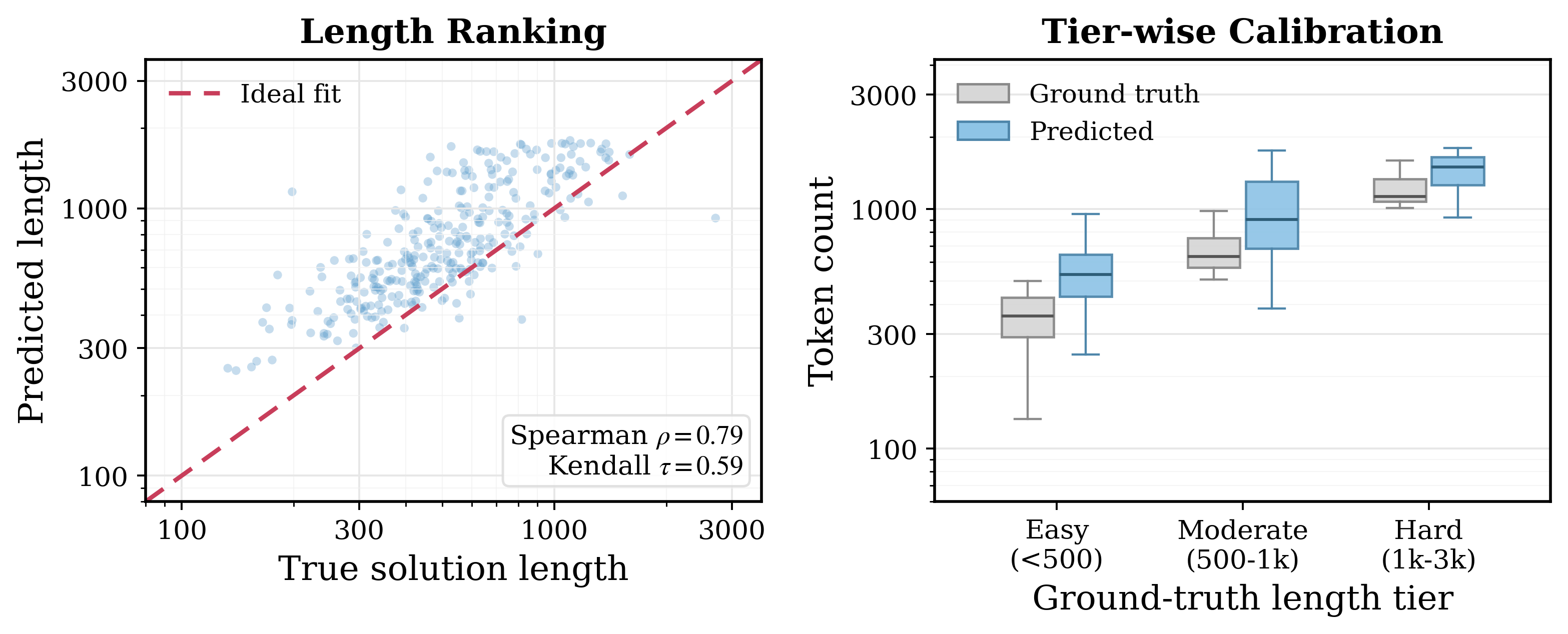}
    \caption{\textbf{Predictor Performance Analysis.} 
    \textbf{(Left)} Log-log scatter plot of predicted vs. actual token lengths. High rank correlation coefficients (Spearman's $\rho$, Kendall's $\tau$) indicate the model effectively captures relative threshold. 
    \textbf{(Right)} Stratified comparison of token count distributions across threshold tiers.}
    \label{fig:prediction_analysis}
\end{figure}

\begin{table*}[htbp]
    \centering
    \caption{\textbf{Comprehensive Performance Comparison across Evaluation Streams.} 
    We compare \CLEAR{} (\texttt{Lambert}) against various baselines including \Uniform{} allocation, Proportional \Predictor{}, and two \CLEAR{} variants. The \Oracle{} represents the theoretical upper bound. 
    \textbf{Bold} indicates the best practical performance among non-oracle methods. 
    The bottom row highlights the \textbf{Absolute Accuracy Gain} of \CLEAR{} (\texttt{Lambert}) over the \Uniform{}.}
    \label{tab:main_results}
    \vspace{2mm}
    \setlength{\tabcolsep}{3.5pt} 
    \renewcommand{\arraystretch}{1.2} 
    \begin{small}
    \begin{tabular}{l|cccc|cccc|cccc|cccc}
        \toprule
        \multirow{2}{*}{\textbf{Method}} & \multicolumn{4}{c|}{\textbf{Balanced Stream}} & \multicolumn{4}{c|}{\textbf{Mostly-Easy Stream}} & \multicolumn{4}{c|}{\textbf{Mostly-Hard Stream}} & \multicolumn{4}{c}{\textbf{U-Shaped Stream}} \\
         & \textbf{256} & \textbf{512} & \textbf{1024} & \textbf{2048} & \textbf{256} & \textbf{512} & \textbf{1024} & \textbf{2048} & \textbf{256} & \textbf{512} & \textbf{1024} & \textbf{2048} & \textbf{256} & \textbf{512} & \textbf{1024} & \textbf{2048} \\
        \midrule
        \Uniform & 3.0 & 12.4 & 17.4 & 21.6 & 9.0 & 33.8 & 45.6 & \textbf{49.0} & 1.0 & 5.0 & 6.8 & 11.4 & 4.4 & 18.4 & 25.4 & 27.2 \\
        \Predictor & 0.6 & 4.6 & \textbf{18.8} & \textbf{22.8} & 1.8 & 17.6 & 46.0 & \textbf{49.0} & 0.0 & 1.6 & 8.2 & \textbf{12.8} & 0.8 & 9.0 & 26.4 & 27.6 \\
        \TALEEP{}  & 3.8 & 6.8 & 16.0 & 22.2 & 7.2 & 10.8 & 22.4 & 39.8 & 4.2 & 5.8 & \textbf{9.2} & 12.0 & 7.2 & 10.4 & 24.6 & \textbf{29.0} \\
        \CLEAR{} (\texttt{Heaur.}) & 11.0 & \textbf{17.0} & \textbf{18.8} & \textbf{22.8} & 31.6 & 37.8 & 46.0 & \textbf{49.0} & 4.0 & \textbf{7.2} & 8.2 & \textbf{12.8} & 17.8 & 20.6 & 26.4 & 27.6 \\
        \CLEAR{} (\texttt{Auction}) & 14.2 & 15.6 & \textbf{18.8} & \textbf{22.8} & 32.8 & 39.2 & 46.0 & \textbf{49.0} & 6.0 & 6.6 & 8.2 & \textbf{12.8} & 18.4 & 22.2 & 26.4 & 27.6 \\
        \midrule
        \textbf{\CLEAR{} (\texttt{Lambert})} & \textbf{14.6} & 16.4 & \textbf{18.8} & \textbf{22.8} & \textbf{33.0} & \textbf{40.4} & \textbf{47.6} & \textbf{49.0} & \textbf{6.2} & 6.8 & 8.2 & \textbf{12.8} & \textbf{18.6} & \textbf{23.2} & \textbf{26.8} & 27.6 \\
        \midrule
        \Oracle & \color{gray}19.0 & \color{gray}21.6 & \color{gray}23.6 & \color{gray}26.0 & \color{gray}42.2 & \color{gray}48.4 & \color{gray}49.8 & \color{gray}49.8 & \color{gray}9.6 & \color{gray}11.4 & \color{gray}12.8 & \color{gray}16.6 & \color{gray}26.4 & \color{gray}27.6 & \color{gray}30.2 & \color{gray}32.8 \\
        \midrule
        \texttt{Gain (Abs.)} & \textcolor{green!60!black}{\textbf{+11.6}} & \textcolor{green!60!black}{+4.0} & \textcolor{green!60!black}{+1.4} & \textcolor{green!60!black}{+1.2} & \textcolor{green!60!black}{\textbf{+24.0}} & \textcolor{green!60!black}{+6.6} & \textcolor{green!60!black}{+2.0} & \textcolor{green!60!black}{+0.0} & \textcolor{green!60!black}{\textbf{+5.2}} & \textcolor{green!60!black}{+1.8} & \textcolor{green!60!black}{+1.4} & \textcolor{green!60!black}{+1.4} & \textcolor{green!60!black}{\textbf{+14.2}} & \textcolor{green!60!black}{+4.8} & \textcolor{green!60!black}{+1.4} & \textcolor{green!60!black}{+0.4} \\
        \bottomrule
    \end{tabular}
    \end{small}
\end{table*}

\begin{figure*}[htbp]
    \centering
    \includegraphics[width=1.0\linewidth]{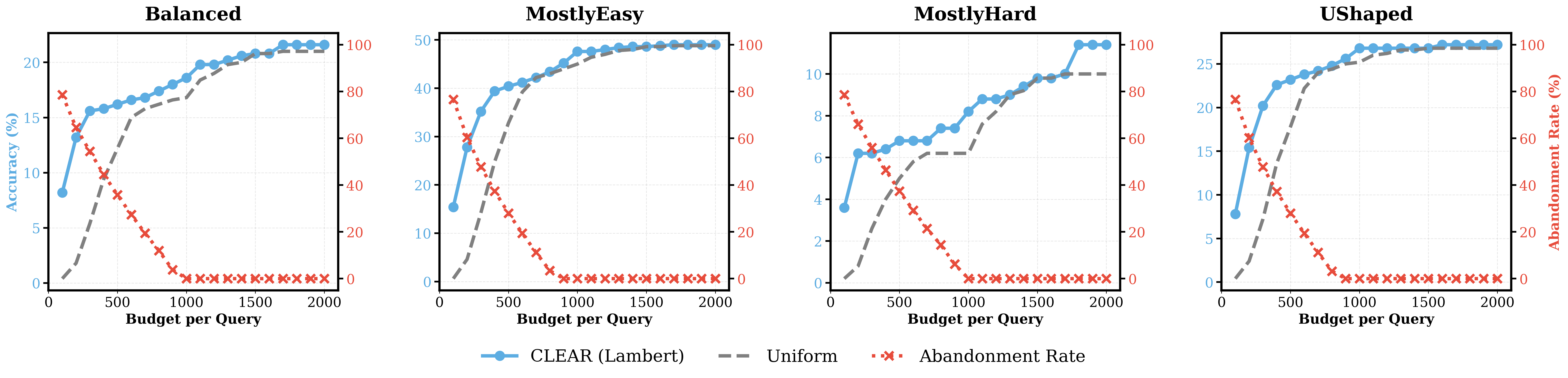}
    \vspace{-6mm}
    \caption{\textbf{Phase Transition Analysis.} 
    Dual-axis plots showing Accuracy in blue and Abandonment Rate in red as the global budget increases. \CLEAR{} significantly outperforms the gray dashed Uniform curve in the low-budget regime by maintaining a high abandonment rate. As the budget increases, abandonment drops to zero and the policies converge.}
    \label{fig:phase_transition}
\end{figure*}

\begin{figure*}[htbp]
    \centering
    \includegraphics[width=1.0\linewidth]{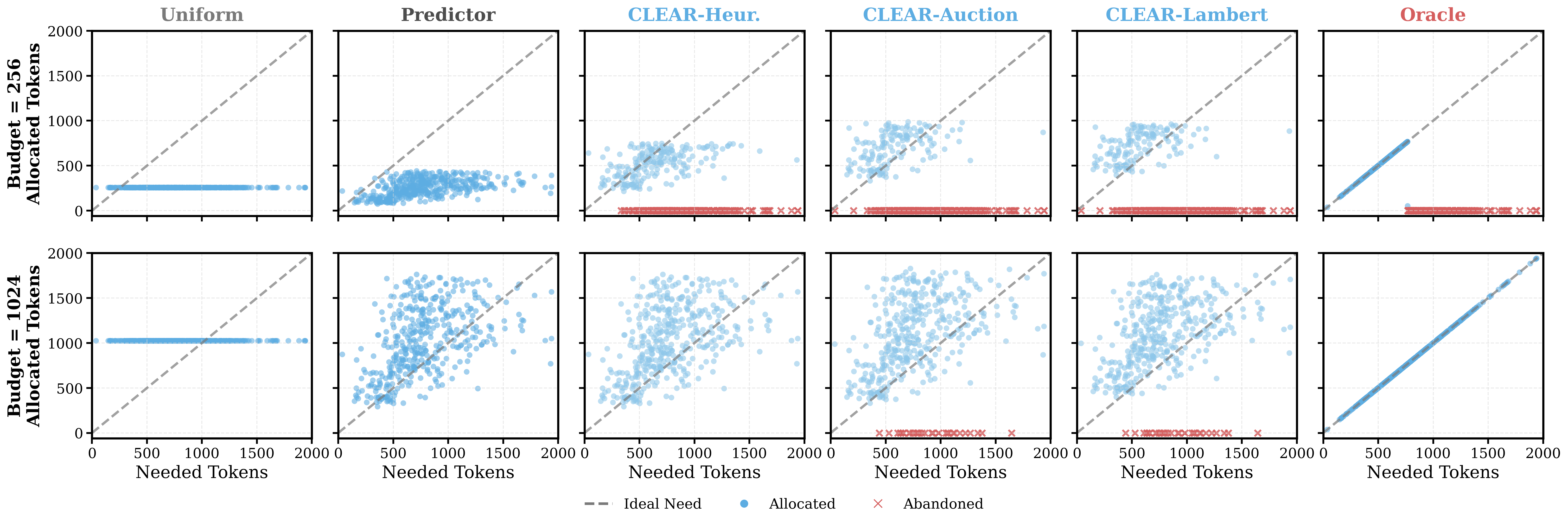}
    \caption{\textbf{Visualization of Allocation Policies.} 
    Each point represents a query from the Balanced stream. The top row shows the scarcity regime with a budget of 256, while the bottom row shows the abundance regime with a budget of 1024. Blue dots represent allocated tasks, and red crosses indicate abandoned tasks. }
    \label{fig:allocation_viz}
\end{figure*}

\begin{figure*}[htbp]
    \centering
    \includegraphics[width=1.0\linewidth]{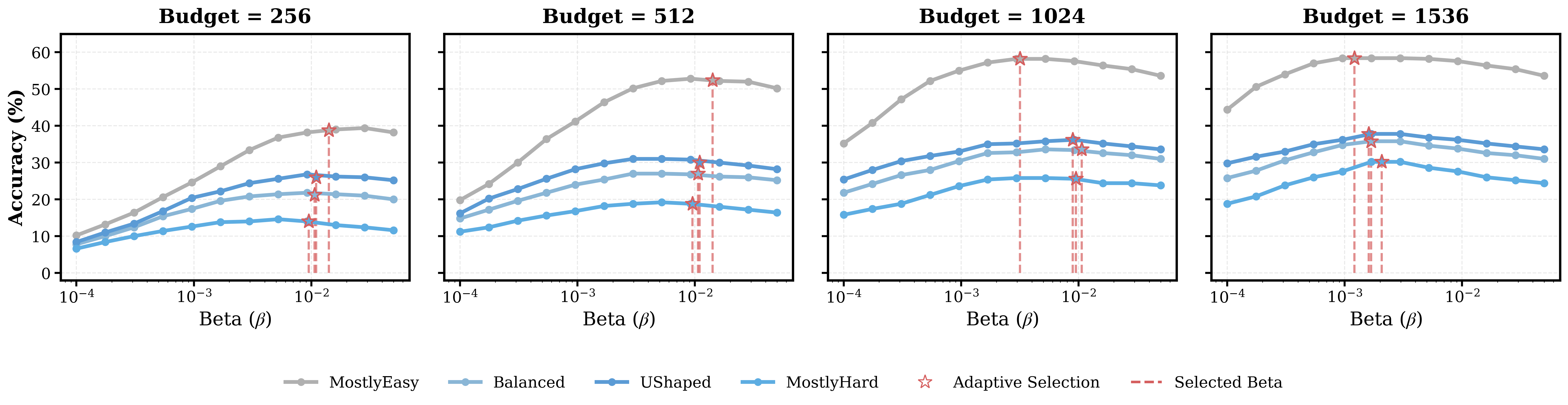} 
    \caption{\textbf{Sensitivity Analysis of the Decay Rate $\beta$.} The \textbf{star markers} ($\star$) indicate the operating points automatically selected by our adaptive $\beta$ mechanism.}
    \label{fig:beta_sensitivity}
\end{figure*}

\begin{figure*}[htbp]
    \centering
    \includegraphics[width=1.0\linewidth]{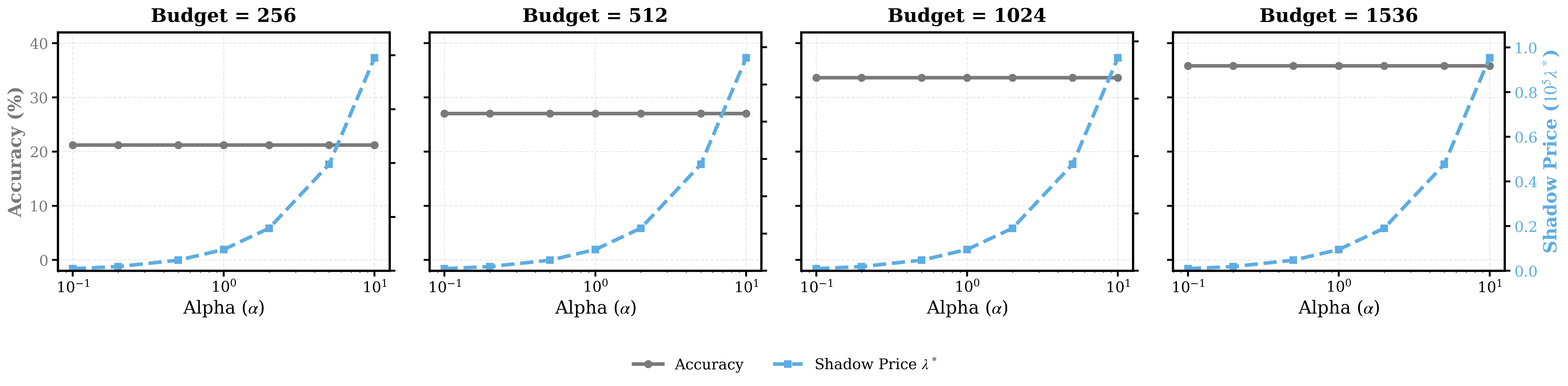} 
    \caption{\textbf{Scale Invariance of the Initial Velocity $\alpha$.} The left axis (grey) shows that accuracy remains effectively constant. The right axis (blue dashed) reveals the optimized shadow price $\lambda^*$ scales linearly with $\alpha$ ($\log \lambda^* \propto \log \alpha$).}
    \label{fig:alpha_invariance}
\end{figure*}

\subsection{Main Performance Comparison}

\paragraph{Comparison on Mathematical Reasoning.} Figure~\ref{fig:allocation_viz} and Table~\ref{tab:main_results} summarizes the accuracy of \CLEAR{} compared to the Uniform, Predictor-based, and Heuristic and Auction-based variants. The largest improvements occur under the most constrained budget of 256 tokens per query. In the \textbf{\texttt{Balanced}} stream, \CLEAR{} improves over the Uniform policy by \textcolor{green!60!black}{\textbf{+11.6}} accuracy points. In the \textbf{\texttt{Mostly-Easy}} stream, \CLEAR{} achieves an even larger gain of \textcolor{green!60!black}{\textbf{+24.0}} accuracy points, indicating that uniform allocation wastes scarce tokens on queries that cannot produce sufficient utility under the global budget. In the \textbf{\texttt{Mostly-Hard}} and \textbf{\texttt{U-Shaped}} streams, \CLEAR{} further improves over Uniform by \textcolor{green!60!black}{\textbf{+5.2}} and \textcolor{green!60!black}{\textbf{+14.2}} points, respectively.
As the budget increases, the absolute gains naturally shrink because most methods receive enough tokens to cross the latent emergence threshold. This trend confirms that \CLEAR{} is most beneficial in resource-scarce regimes, where shadow-price allocation and rational abandonment prevent the system from spreading compute too thinly across the batch.

\paragraph{Generalization to Code Generation}
To examine whether the proposed allocation principle extends beyond mathematical reasoning, we evaluate \CLEAR{} on code-generation tasks. We use \texttt{Qwen2.5-Coder-7B} and retrain the length regressor in-domain using completion lengths from \texttt{HumanEval} and \texttt{MBPP}, while keeping the allocation algorithm unchanged. Results are reported under a best-of-$K$ protocol with $K=4$: each trajectory has a cap of $B=1024$, and each query receives the same total completion-token budget of $4B$ across methods.

\begin{table}[htbp]
    \centering
    \caption{\textbf{Code Generation Results.} 
    Accuracy comparison under aligned best-of-4 completion-token budgets.}
    \label{tab:code_generation}
    \vspace{1mm}
    \setlength{\tabcolsep}{5pt}
    \renewcommand{\arraystretch}{1.15}
    \begin{small}
    \resizebox{\linewidth}{!}{
    \begin{tabular}{lccc}
        \toprule
        \textbf{Method} & \textbf{HumanEval+} & \textbf{MBPP+} & \textbf{BigCodeBench} \\
        \midrule
        \Uniform{} & 36.8 & 39.4 & 16.2 \\
        \textbf{\CLEAR{} (\texttt{Lambert})} & \textbf{43.1} & \textbf{45.9} & \textbf{19.8} \\
        \midrule
        \texttt{Gain (Abs.)} & \textcolor{green!60!black}{\textbf{+6.3}} & \textcolor{green!60!black}{\textbf{+6.5}} & \textcolor{green!60!black}{\textbf{+3.6}} \\
        \bottomrule
    \end{tabular}
    }
    \end{small}
\end{table}

As shown in Table~\ref{tab:code_generation}, \CLEAR{} consistently improves over uniform allocation across all three code benchmarks. This suggests that global budget clearing and strategic abandonment are not limited to the math-focused setting, but can also benefit stochastic code generation under strict compute constraints.

\paragraph{Visualization of Token Allocation}
To understand the mechanism behind the performance of \CLEAR{}, we visualize the token allocation in Figure~\ref{fig:allocation_viz}.

Under the scarcity regime of 256 tokens, the \Uniform{} and \Predictor-based policies distribute the budget continuously but insufficiently across all queries. Consequently, the majority of tasks receive fewer tokens than their ground-truth reasoning length, leading to systemic failure. In contrast, \CLEAR{} explicitly abandons tasks where the computational cost outweighs the potential gain. Crucially, the resources conserved from these discarded tasks are redistributed to the admitted queries. Unlike binary cut-off or auction, the Lambert policy dynamically adjusts the extra allowance based on the decay rate $\beta$, ensuring valid tasks receive ample resources to handle uncertainty.

\subsection{Robustness and Sensitivity}
We further analyze the robustness of \CLEAR{} across three dimensions: phase transitions, predictor noise, and hyperparameter sensitivity.

\textbf{Robustness to Predictor Noise.}
Figure~\ref{fig:noise_sensitivity} shows the accuracy degradation as we inject log-normal noise into the predictor. While performance naturally declines with increasing noise, \CLEAR{} maintains a significant advantage over Uniform even at high noise levels.

\textbf{Sensitivity to Hyperparameters.}
We examine the system's sensitivity to its two core economic parameters: the decay rate $\beta$ and the initial velocity $\alpha$.

Figure~\ref{fig:beta_sensitivity} evaluates the impact of static $\beta$ values. We observe a distinct regime shift: low-budget scenarios require a strict, high $\beta$ to minimize waste and ensure solvency for easier tasks, whereas high-budget scenarios benefit from a lenient, low $\beta$ that allows for extended reasoning surges. Therefore, static $\beta$ cannot simultaneously satisfy these conflicting demands. Our adaptive mechanism dynamically calibrates $\beta$ based on the aggregate surplus, consistently landing on the Pareto-optimal frontier across all regimes without manual tuning.
Unlike $\beta$, the threshold $\alpha$ exhibits a unique scale-invariant property. As shown in Figure~\ref{fig:alpha_invariance}, varying $\alpha$ results in negligible fluctuations in accuracy. Meanwhile, the market-clearing shadow price $\lambda^*$ scales linearly with $\alpha$. Since the Lambert W allocation depends on the ratio $\lambda/\alpha$, the optimizer automatically adjusts $\lambda^*$ to offset any changes in $\alpha$. 

\subsection{Structural Utility Variants}
To test whether the gains of \CLEAR{} depend on the exact shifted-surge parameterization, we evaluate two alternative utility structures that preserve the same threshold-aware allocation principle but replace the latent utility shape. The \texttt{Triangular} variant uses a tent-shaped utility that rises linearly after the predicted threshold and then decays linearly after its peak. The \texttt{Quadratic} variant uses a concave quadratic peak over the post-threshold extension. In both cases, the system still solves for a global budget-clearing allocation and applies rational abandonment under tight budgets.
Let $\Delta t = t - \tau_i$ denote the effective post-threshold generation length, $c_i$ denote the maximum supported post-threshold extension, $p_i \in (0,c_i)$ denote the triangular peak location, and $u_i^{\max}$ denote the peak utility. The \texttt{Triangular} utility is defined as:
\begin{equation}
    \phi_i^{\texttt{Tri}}(t) =
    \begin{cases}
    0 & 0 \le t < \tau_i \\
    u_i^{\max} \cdot \frac{\Delta t}{p_i} & 0 \le \Delta t \le p_i \\
    u_i^{\max} \cdot \frac{c_i - \Delta t}{c_i - p_i} & p_i < \Delta t \le c_i \\
    0 & \Delta t > c_i .
    \end{cases}
    \label{eq:triangular_potential}
\end{equation}

The \texttt{Quadratic} utility uses a smooth concave peak over the same post-threshold support:
\begin{equation}
    \phi_i^{\texttt{Quad}}(t) =
    \begin{cases}
    0 & 0 \le t < \tau_i \\
    u_i^{\max} \cdot \frac{4\Delta t(c_i-\Delta t)}{c_i^2} & 0 \le \Delta t \le c_i \\
    0 & \Delta t > c_i .
    \end{cases}
    \label{eq:quadratic_potential}
\end{equation}

\begin{table}[htbp]
    \centering
    \caption{\textbf{Structural Utility Variant Results.}
    Accuracy comparison under the tight-budget setting ($\bar{B}=256$).}
    \label{tab:structural_variants}
    \vspace{1mm}
    \setlength{\tabcolsep}{4pt}
    \renewcommand{\arraystretch}{1.15}
    \begin{small}
    \resizebox{\linewidth}{!}{
    \begin{tabular}{lccc}
        \toprule
        \textbf{Method} & \textbf{Balanced} & \textbf{Mostly-Hard} & \textbf{U-Shaped} \\
        \midrule
        \Uniform{} & 5.4 & 2.2 & 8.2 \\
        \CLEAR{} (\texttt{Lambert}) & \textbf{19.4} & \textbf{13.4} & \textbf{23.8} \\
        \textbf{\texttt{Triangular}} (\texttt{Lambert}) & 18.8 & 13.0 & 21.6 \\
       \textbf{ \texttt{Quadratic}} (\texttt{Lambert}) & \textbf{19.4} & \textbf{13.4} & 23.6 \\
        \bottomrule
    \end{tabular}
    }
    \end{small}
\end{table}

\begin{figure}[htbp]
    \centering
    \includegraphics[width=1.0\linewidth]{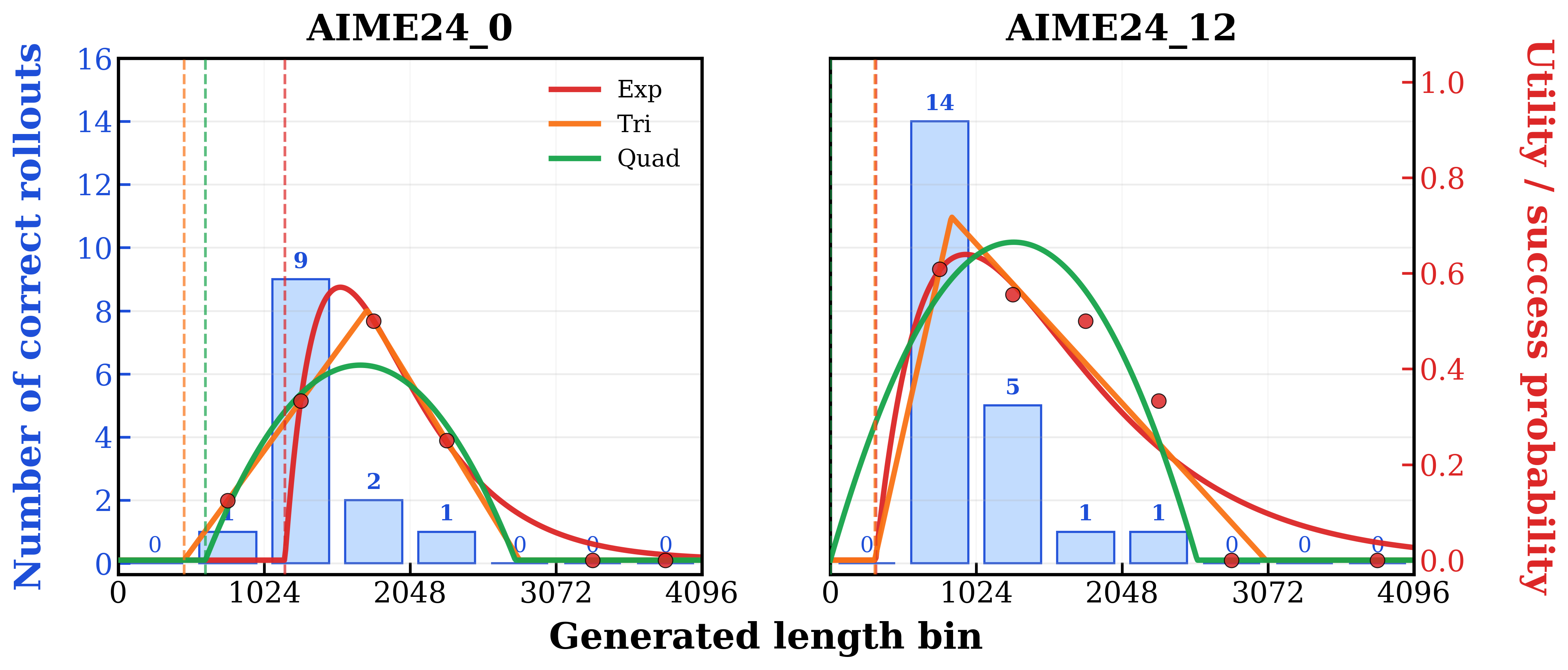}
    \caption{\textbf{Visualization of Structural Utility Variants.}}
    \label{fig:structural_variants}
\end{figure}

Table~\ref{tab:structural_variants} shows that both structural variants remain substantially above uniform allocation in the most resource-constrained regime. This suggests that the main benefit comes from global budget clearing and abandonment-aware allocation, rather than from a single fragile choice of utility curve.


\section{Related Work}
\paragraph{Resource Allocation for LLM Inference.}
LLM deployment is increasingly studied as a resource allocation problem, where computation, latency, and tokens are scarce assets to be assigned under global constraints. Systems such as FrugalGPT route queries through model cascades to reduce cost \citep{chen2023frugalgpt}, while RouteLLM learns preference-based routing policies across models \citep{ong2024routellm}. Other work formulates model selection or query allocation as contextual bandits, knapsack optimization, or cascade scheduling, including LLM-Bandit and TREACLE \citep{zhang2024efficient}. Complementary infrastructure analyses characterize the hardware-level Pareto frontier of throughput, latency, and energy \citep{kwon2023efficient, sheng2023flexgen, wilhelm2025beyond}. 

\paragraph{Efficient CoT.}
CoT prompting and inference-time scaling show that allocating more computation at test time can substantially improve reasoning performance \citep{wei2022chain, snell2024scaling, wu2024inference, wan2025adapthink}. A related line of work improves token efficiency by adapting the reasoning procedure itself. DSC \citep{wang2025make} allocates different numbers of sampled reasoning paths according to question difficulty and posterior confidence. SelfBudgeter \citep{li2025selfbudgeter} trains the model to estimate and obey instance-specific reasoning budgets, reducing unnecessary overthinking. TALE \citep{han2025token} estimates a token budget for each problem and injects it into the prompt to guide shorter CoT generation. These methods focus on controlling or aggregating reasoning traces within a query. In contrast, \CLEAR{} addresses a batch-level allocation problem: given fixed decoding behavior and a fixed global token budget, it decides how much budget each query should receive and when a query should be rationally abandoned.

\section{Conclusion}
\label{sec:conclusion}
In this work, we study how to allocate a fixed inference-time token budget across heterogeneous reasoning queries. We show that uniform budgeting can waste tokens on hard queries that cannot be solved within the available budget, while overlooking queries that could be solved with a more targeted allocation. To address this mismatch, we formulate batch-level token allocation as a constrained optimization problem and introduce \CLEAR{}. \CLEAR{} assigns zero budget to queries with negative expected surplus and reallocates tokens to queries where additional computation is more likely to improve utility. Extensive experiments show that \CLEAR{} improves the cost-accuracy Pareto frontier, with the largest gains appearing in resource-scarce regimes where uniform allocation spreads computation too thinly. Future work will focus on narrowing the gap toward oracle-level allocation by improving the robustness of utility modelling.

\section*{Impact Statement}
This paper presents work whose goal is to advance the field
of LLM and machine learning. There are many potential
societal consequences of our work, none which we feel must
be specifically highlighted here.

\section*{Acknowledgements}
This work was supported by the National Natural
Science Foundation of China under Grant 72571007 and Grant 72595830/72595831.

\bibliography{main}
\bibliographystyle{icml2026}

\appendix
\section{Appendix: Algorithm Details}
\begin{algorithm}[ht]
\caption{\CLEAR{} (\texttt{Lambert})}
\label{alg:clear_solver}
\begin{algorithmic}[1]
\STATE \textbf{Input:} Batch $\mathcal{S}=\{s_1,\dots,s_N\}$, total budget $B_{\text{total}}$, token cap $T_{\max}$
\STATE \textbf{Params:} Initial velocity $\alpha$, tolerance $\epsilon$
\STATE \textbf{Output:} Allocation vector $\mathbf{t}^*$

\STATE $\hat{\boldsymbol{\tau}} \leftarrow \text{PredictThreshold}(\mathcal{S})$
\STATE $\bar{B} \leftarrow B_{\text{total}}/N,\quad \bar{\tau} \leftarrow \frac{1}{N}\sum_{i=1}^N \hat{\tau}_i$
\STATE $\beta \leftarrow 1 / \max(\epsilon, \bar{B}-\bar{\tau})$ \COMMENT{batch-adaptive decay rate}
\STATE $\mathbf{t}^{\text{best}} \leftarrow \mathbf{0}$

\STATE \textit{// Bisection search for the shadow price $\lambda^*$}
\STATE $\lambda_{\min} \leftarrow 0, \quad \lambda_{\max} \leftarrow \alpha - 10^{-6}$

\WHILE{$(\lambda_{\max} - \lambda_{\min}) > \epsilon$}
    \STATE $\lambda \leftarrow (\lambda_{\min} + \lambda_{\max}) / 2$
    \STATE $C_{\text{current}} \leftarrow 0$
    
    \FOR{$i=1$ to $N$}
        \IF{$\lambda \ge \alpha$} 
            \STATE $t_i \leftarrow 0$
        \ELSE
            \STATE $z \leftarrow (\lambda e) / \alpha$
            \STATE $u \leftarrow W_0(z)$
            \STATE $\Delta t_i \leftarrow (1-u)/\beta$
            \IF{$\Delta t_i \le 0$}
                \STATE $t_i \leftarrow 0$
            \ELSE
                \STATE $t_i^{\text{unc}} \leftarrow \hat{\tau}_i + \Delta t_i$
                \STATE $\phi_i \leftarrow \alpha \Delta t_i \exp(-\beta \Delta t_i)$
                \IF{$\phi_i > \lambda t_i^{\text{unc}}$}
                    \STATE $t_i \leftarrow \min(t_i^{\text{unc}}, T_{\max})$
                \ELSE
                    \STATE $t_i \leftarrow 0$
                \ENDIF
            \ENDIF
        \ENDIF
        \STATE $C_{\text{current}} \leftarrow C_{\text{current}} + t_i$
    \ENDFOR
    
    \IF{$C_{\text{current}} > B_{\text{total}}$}
        \STATE $\lambda_{\min} \leftarrow \lambda$
    \ELSE
        \STATE $\lambda_{\max} \leftarrow \lambda$
        \STATE $\mathbf{t}^{\text{best}} \leftarrow \mathbf{t}$
    \ENDIF
\ENDWHILE

\STATE $\mathbf{t}^* \leftarrow \lfloor \mathbf{t}^{\text{best}} \rfloor$
\STATE Clip each $t_i^*$ to $[0,T_{\max}]$ and optionally correct residual tokens.
\STATE \textbf{return} $\mathbf{t}^*$
\end{algorithmic}
\end{algorithm}

\subsection{Pseudocode}
We present the pseudocode for the \CLEAR{} in Algorithm \ref{alg:clear_lambert}, which computes the optimal resource allocation vector under a given total budget constraint.

\subsection{Allocation Policy Details}
\label{app:allocation_details}

In this section, we formally define the token allocation $t_i$ for a batch of queries $\mathcal{S} = \{s_1, \dots, s_N\}$ under a total budget constraint $B_{\text{total}}$. Let $\hat{\tau}_i$ denote the predicted emergence threshold used by practical policies, and let $d_i$ denote the ground-truth solution length used only by the \Oracle{} policy. The average budget per query is $\bar{B} = B_{\text{total}} / N$.

\subsubsection{\Uniform{} Policy}
This is the standard baseline where resources are distributed equally, agnostic to task complexity.
\begin{equation}
    t_i^{\text{unif}} = \lfloor \bar{B} \rfloor.
\end{equation}

\subsubsection{\Predictor{}-Based Proportional Policy}
This policy assumes a linear relationship between the latent threshold and required tokens, scaling the budget proportionally to the predicted length without abandonment.
\begin{equation}
    t_i^{\text{prop}} = \frac{\hat{\tau}_i}{\sum_{j=1}^N \hat{\tau}_j} \cdot B_{\text{total}}.
\end{equation}
In practice, we clip the allocation to a minimum (e.g., 32 tokens) and maximum context length.

\subsubsection{\TALEEP{} Policy}
\TALEEP{} first uses a stronger teacher model to estimate per-query completion needs, then enforces both a soft prompt-level budget constraint and a hard decoding cap during regeneration.
Let $\tilde{\tau}_i$ denote the raw budget estimate from the teacher model (e.g., \texttt{Qwen3-30B-Instruct}).

\begin{enumerate}
    \item \textbf{Teacher-based budget estimation:}
    For each query $s_i$, the teacher predicts a raw token need $\tilde{\tau}_i$.

    \item \textbf{Renormalized proportional allocation:}
    The raw estimates are converted into per-query caps under the same total budget:
    \begin{equation}
        \hat{t}_i^{\text{tale}} = \frac{\tilde{\tau}_i}{\sum_{j=1}^N \tilde{\tau}_j} \cdot B_{\text{total}}.
    \end{equation}
    In practice, we apply clipping and integerization:
    \begin{equation}
        t_i^{\text{tale}} = \mathrm{clip}\!\left(\left\lfloor \hat{t}_i^{\text{tale}} \right\rceil,\, t_{\min},\, t_{\max}\right),
    \end{equation}
    followed by a residual correction step to ensure exact budget conservation:
    \begin{equation}
        \sum_{i=1}^N t_i^{\text{tale}} = B_{\text{total}}.
    \end{equation}

    \item \textbf{Budget-conditioned regeneration:}
    For each query $s_i$, we construct a budget-aware prompt with a soft instruction (e.g., “keep the entire response within at most $t_i^{\text{tale}}$ completion tokens”), and run decoding with a hard cap:
    \begin{equation}
        y_i \sim p_\theta(\cdot \mid s_i,\; t_i^{\text{tale}}), 
        \qquad \texttt{max\_tokens}=t_i^{\text{tale}}.
    \end{equation}
    We use greedy decoding in our setup ($\texttt{temperature}=0$, $\texttt{top\_p}=1$).
\end{enumerate}

Final accuracy is computed by applying \texttt{math\_verify} to regenerated outputs $\{y_i\}_{i=1}^N$.
\subsubsection{\CLEAR{}(\texttt{Heuristic}) Policy}
This baseline uses a fixed cutoff rule based on the predicted thresholds. When the average budget is too small relative to the average predicted threshold, it drops the harder half of the batch and reallocates the budget to the remaining queries.
\begin{enumerate}
    \item \textbf{Check budget scarcity:} Let $\hat{\mathcal{T}}=\{\hat{\tau}_1,\dots,\hat{\tau}_N\}$ and define $\eta = \bar{B} / \mathbb{E}[\hat{\mathcal{T}}]$. If $\eta < 0.8$, the cutoff rule is activated.
    \item \textbf{Select easier queries:} Queries with $\hat{\tau}_i > \text{Median}(\hat{\mathcal{T}})$ receive zero budget. The remaining queries share the total budget using:
    \begin{equation}
        t_i^{\text{heur}} = 
        \begin{cases} 
        \mu_{\text{sur}} + \kappa \cdot (\hat{\tau}_i - \mu_{\text{sur}}) & \text{if } \hat{\tau}_i \le \text{Median}(\hat{\mathcal{T}}) \\
        0 & \text{otherwise,}
        \end{cases}
    \end{equation}
    where $\mu_{\text{sur}}$ is the mean predicted threshold among selected queries, and $\kappa$ is chosen so that the selected queries use exactly $B_{\text{total}}$ tokens in total.
\end{enumerate}

\subsubsection{\CLEAR{}(\texttt{Auction}) Policy}
This ablation uses a simple greedy selection rule. It first keeps the queries with smaller predicted thresholds, since they are more likely to be completed under a limited budget, and then distributes the available tokens among the selected queries.
\begin{enumerate}
    \item \textbf{Sort by predicted length:} Sort the query indices by increasing predicted threshold, yielding a permutation $(p_1,\dots,p_N)$ such that $\hat{\tau}_{p_1} \le \dots \le \hat{\tau}_{p_N}$.
    \item \textbf{Select survivors:} Keep the largest prefix of this sorted list whose predicted thresholds fit within the total budget:
    \begin{equation}
        m^* = \max \left\{ m \in [1, N] \mid \sum_{j=1}^m \hat{\tau}_{p_j} \le B_{\text{total}} \right\}.
    \end{equation}
    \item \textbf{Allocate to survivors:} Assign zero budget to all non-selected queries. The selected queries then share the full budget using the same affine allocation rule as \CLEAR{} (\texttt{Heuristic}), rescaled so that $\sum_i t_i = B_{\text{total}}$.
\end{enumerate}

\subsubsection{\Oracle{} Policy}
This policy is an upper-bound baseline that uses ground-truth solution lengths $d_i$, which are unavailable at test time. It spends tokens on the shortest solvable queries first, maximizing the number of completed tasks under the total budget.
\begin{enumerate}
    \item \textbf{Sort by true length:} Sort indices into a permutation $(o_1,\dots,o_N)$ such that $d_{o_1} \le \dots \le d_{o_N}$.
    \item \textbf{Fill the budget greedily:} Allocate exactly $d_{o_j}$ tokens to each query in sorted order until the next query would exceed $B_{\text{total}}$:
    \begin{equation}
        t_{o_j}^{\text{oracle}} = 
        \begin{cases} 
        d_{o_j} & \text{if } \sum_{l=1}^j d_{o_l} \le B_{\text{total}} \\
        0 & \text{otherwise.}
        \end{cases}
    \end{equation}
\end{enumerate}

\section{Appendix: More Results}
\label{app:results}

To demonstrate the generalizability of our framework to larger-scale reasoning models, we present additional experimental results using Qwen3-30B-A3B-Instruct. 

Figure \ref{fig:prediction_scatter_30b} validates the threshold predictor's accuracy on this larger model, while Table \ref{tab:main_results_30b} details the downstream allocation performance across varying supply-demand scenarios. In summary, CLEAR consistently outperforms the uniform baseline across all tested conditions, achieving up to \textbf{+2.4} accuracy gains in strictly constrained budget environments where efficient resource distribution is most critical.

\begin{figure}[htbp]
    \centering
    \includegraphics[width=1.0\linewidth]{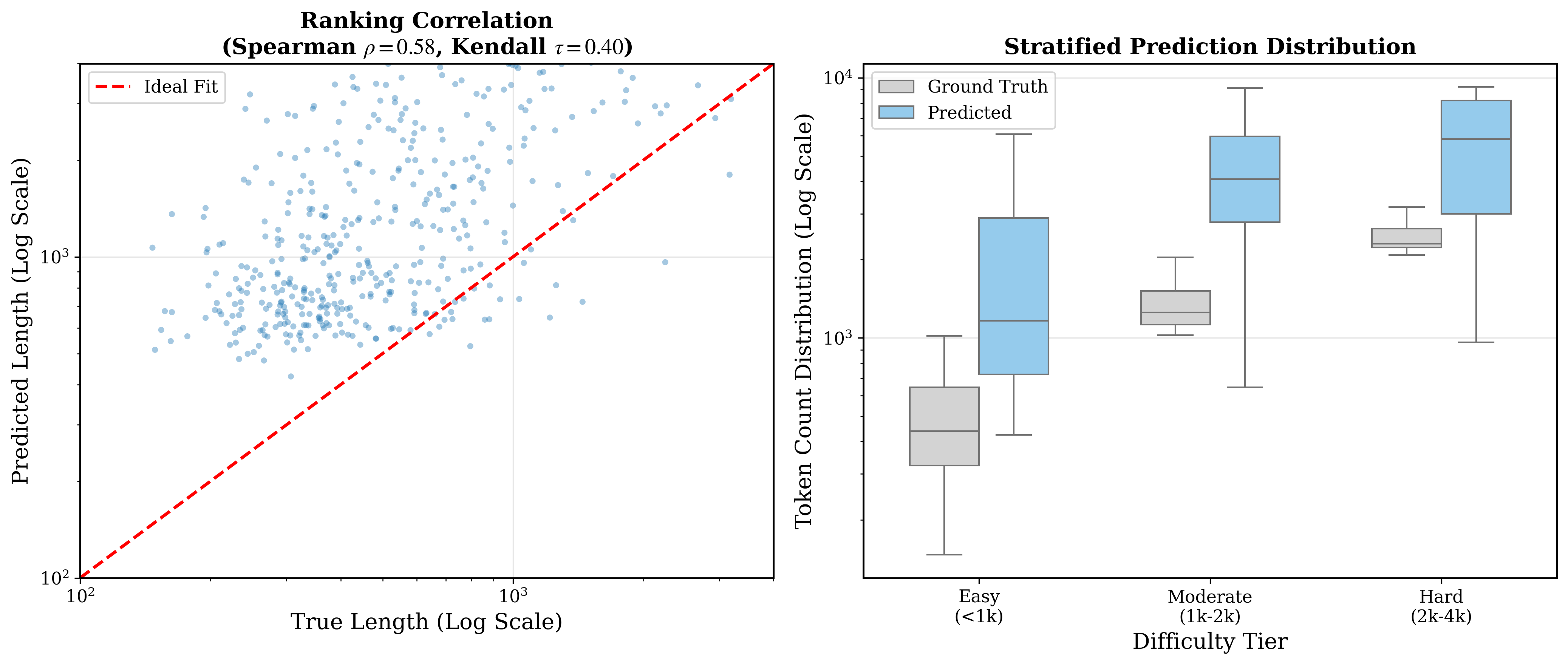}
    \caption{\textbf{Predictor Performance Analysis on Qwen3-30B-A3B-Instruct.} }
    \label{fig:prediction_scatter_30b}
\end{figure}

\begin{table*}[htbp]
    \centering
    \caption{\textbf{Performance Comparison across Difficulty Scenarios and Budgets on Qwen3-30B-A3B-Instruct.} 
    We compare our proposed \textbf{CLEAR} against Uniform, Direct Predictor, and heuristic CLEAR variants. 
    Results are reported as accuracy (\%).
    \textbf{Bold} indicates the best performance among non-oracle methods. 
    The bottom row shows the absolute accuracy gain of CLEAR over the Uniform baseline.}
    \label{tab:main_results_30b}
    \vspace{2mm}
    \setlength{\tabcolsep}{5pt} 
    \renewcommand{\arraystretch}{1} 
    \begin{small}
    \begin{tabular}{l|cc|cc|cc|cc}
        \toprule
        \multirow{2}{*}{\textbf{Method}} & \multicolumn{2}{c|}{\textbf{Balanced Stream}} & \multicolumn{2}{c|}{\textbf{Mostly-Easy}} & \multicolumn{2}{c|}{\textbf{Mostly-Hard}} & \multicolumn{2}{c}{\textbf{U-Shaped}} \\
         & \textbf{1024} & \textbf{4096} & \textbf{1024} & \textbf{4096} & \textbf{1024} & \textbf{4096} & \textbf{1024} & \textbf{4096} \\
        \midrule
        Uniform & 16.4 & 26.4 & 46.6 & 54.4 & 6.4 & 14.2 & 26.6 & 30.0 \\
        Predictor & 8.0 & \textbf{28.0} & 32.4 & \textbf{55.6} & 1.8 & \textbf{15.2} & 13.4 & \textbf{31.6} \\
        CLEAR (Heur.) & 18.4 & \textbf{28.0} & 37.2 & \textbf{55.6} & \textbf{9.8} & \textbf{15.2} & 23.4 & \textbf{31.6} \\
        CLEAR (Auction) & 16.2 & 24.8 & 41.0 & 54.2 & 7.0 & 13.0 & 24.4 & 29.6 \\
        \midrule
        \textbf{CLEAR (Ours)} & \textbf{18.8} & 26.8 & \textbf{47.4} & 54.6 & 8.6 & 14.8 & \textbf{27.2} & 30.6 \\
        \midrule
        Oracle & \color{gray}28.8 & \color{gray}38.0 & \color{gray}55.4 & \color{gray}57.8 & \color{gray}17.4 & \color{gray}27.8 & \color{gray}33.2 & \color{gray}38.0 \\
        \midrule
        \textit{Gain (vs. Uniform)} & \textcolor{green!60!black}{\textbf{+2.4}} & \textcolor{green!60!black}{+0.4} & \textcolor{green!60!black}{\textbf{+0.8}} & \textcolor{green!60!black}{+0.2} & \textcolor{green!60!black}{\textbf{+2.2}} & \textcolor{green!60!black}{+0.6} & \textcolor{green!60!black}{\textbf{+0.6}} & \textcolor{green!60!black}{+0.6} \\
        \bottomrule
    \end{tabular}
    \end{small}
\end{table*}

\begin{figure*}[htbp]
    \centering
    \includegraphics[width=1.0\linewidth]{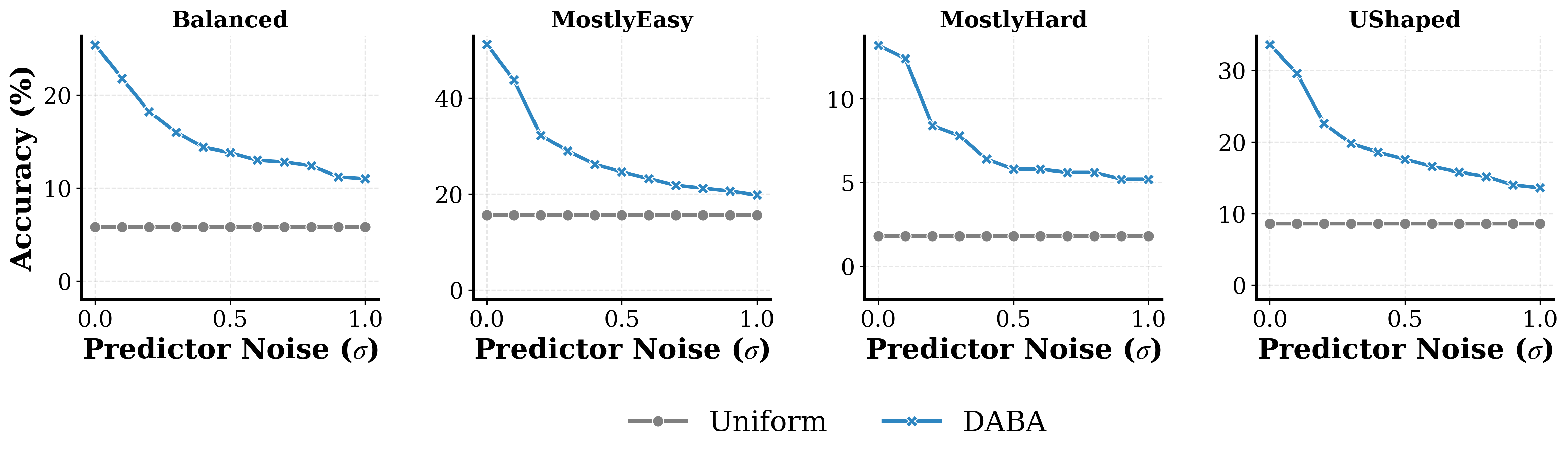}
    \caption{\textbf{Robustness to Predictor Noise.} 
    Performance of CLEAR versus Uniform under increasing predictor noise $\sigma$. CLEAR maintains a performance advantage over the baseline even under significant noise levels.}
    \label{fig:noise_sensitivity}
\end{figure*}

\begin{table*}[htbp]
    \centering
    \caption{\textbf{Detailed Experimental Configuration.} This table summarizes the hyperparameters and settings used for data generation, predictor training, and the CLEAR allocation mechanism across all experiments.}
    \label{tab:experiment_config}
    \vspace{2mm}
    \renewcommand{\arraystretch}{1.2}
    \begin{small}
    \begin{tabular}{p{3.5cm} p{5.5cm} p{6.5cm}}
        \toprule
        \textbf{Component} & \textbf{Parameter / Setting} & \textbf{Value / Description} \\
        \midrule
        \multicolumn{3}{l}{\textit{\textbf{1. Data Generation (Oracle)}}} \\
        \midrule
        & Backbone Models & Qwen2.5-Math-7B, Qwen3-30B-A3B-Instruct \\
        & Max New Tokens & 16,384 (for 30B), 8,192 (for 7B) \\
        & Decoding Strategy & Greedy Decoding (Temperature = 0) \\
        \midrule
        \multicolumn{3}{l}{\textit{\textbf{2. Threshold Predictor}}} \\
        \midrule
        & Architecture & DeBERTa-v3-base (86M parameters) \\
        & Training Sources & GSM8K (Train), MATH (Train) \\
        & Input Tokenization & Left-Truncation (Retain last 512 tokens) \\
        & Max Sequence Length & 512 tokens \\
        & Training Objective & Mean Squared Error (MSE) on Log-Length \\
        & Optimization & AdamW (LR=2e-5, Weight Decay=0.01, Batch=32) \\
        & Training Schedule & 10 Epochs \\
        \midrule
        \multicolumn{3}{l}{\textit{\textbf{3. CLEAR Allocation Mechanism}}} \\
        \midrule
        & Global Parameters & $\alpha = 2.0$  \\
        & Optimization Method & Bisection Search (40 iterations, $\epsilon=1e-6$) \\
        \midrule
        \multicolumn{3}{l}{\textit{\textbf{4. Evaluation Scenarios}}} \\
        \midrule
        & Sample Size & $N=5,00$ queries per simulation stream \\
        & Evaluation Sources & MATH-500, OlympiadBench, AIME (24/25), AMC-23, Minerva \\
        \bottomrule
    \end{tabular}
    \end{small}
\end{table*}

\section{Appendix: Data Composition and Statistics}
\label{app:data_stats}

\paragraph{Training and Test Data} Table~\ref{tab:dataset_stats} and Table~\ref{tab:dataset_stats_30b} provide a detailed breakdown of the datasets used for training the threshold predictor and evaluating the budget allocation policies. To ensure the robustness of our \CLEAR{} framework, the evaluation suite is intentionally designed to be Out-of-Distribution (OOD) relative to the predictor's training set.

\begin{table*}[h]
    \centering
    \caption{\textbf{Detailed Statistics of Training and Evaluation Datasets for Qwen-2.5-math-7B-Instruct} under greedy decoding and \textbf{4K} new token constraints.}
    \label{tab:dataset_stats}
    \vspace{2mm}
    \renewcommand{\arraystretch}{1.2}
    \begin{small}
    \begin{tabular}{llcccccl}
        \toprule
        \textbf{Use Case} & \textbf{Dataset} & \textbf{Split} & \textbf{Total} & \textbf{Correct} & \textbf{Pass@1} & \textbf{Avg. Len} & \textbf{Threshold Tier} \\
        \midrule
        \multirow{2}{*}{\textit{Predictor Training}} & GSM8K & Train & 1,319 & 1,025 & 78.5\% & 77.71 & Easy \\
        & MATH & Train & 12,500 & 9,477 & 75.8\% & 658 & Moderate \\
        \midrule
        \multirow{6}{*}{\textit{Performance Eval.}} 
        & MATH 500 & Test & 500 & 354 & 70.8\% & 1,074 & Moderate \\
        & OlympiadBench & Test & 675 & 88 & 13.0\% & 1,059 & Hard \\
        & AMC-23 & Test & 40 & 19 & 47.5\% & 1,490 &  Hard \\
        & AIME 2024 & Test & 30 & 8 & 26.7\% & 1,369 & Very Hard \\
        & AIME 2025 & Test & 30 & 2 & 6.7\% & 1,886 & Very Hard \\
        & Minerva & Test & 272 & 35 & 12.9\% & 2,274 & Very Hard \\
        \bottomrule
    \end{tabular}
    \end{small}
\end{table*}

\begin{table*}[h]
    \centering
    \caption{\textbf{Detailed Statistics of Training and Evaluation Datasets for Qwen3-30B-A3B-Instruct} under greedy decoding and \textbf{16K} new token constraints. The threshold tiers are assigned based on the average reasoning length of these solutions.}
    \label{tab:dataset_stats_30b}
    \vspace{2mm}
    \renewcommand{\arraystretch}{1.2}
    \begin{small}
    \begin{tabular}{llcccccl}
        \toprule
        \textbf{Use Case} & \textbf{Dataset} & \textbf{Split} & \textbf{Total} & \textbf{Correct} & \textbf{Pass@1} & \textbf{Avg. Len} & \textbf{Threshold Tier} \\
        \midrule
        \multirow{2}{*}{\textit{Predictor Training}} & GSM8K & Train & 1,055 & 935 & 88.6\% & 1,254 & Easy \\
        & MATH & Train & 12,500 & 8,246 & 66.0\% & 3,763 & Moderate \\
        \midrule
        \multirow{6}{*}{\textit{Performance Eval.}} 
        & MATH 500 & Test & 500 & 288 & 57.6\% & 9,887 & Moderate \\
        & OlympiadBench & Test & 675 & 179 & 26.5\% & 9,450 & Hard \\
        & AMC-23 & Test & 40 & 14 & 35.0\% & 13,958 & Hard \\
        & AIME 2024 & Test & 30 & 3 & 10.0\% & 14,428 & Very Hard \\
        & AIME 2025 & Test & 30 & 14 & 46.7\% & 14,425 & Very Hard \\
        & Minerva & Test & 272 & 37 & 13.6\% & 10,771 & Very Hard \\
        \bottomrule
    \end{tabular}
    \end{small}
\end{table*}

\end{document}